%
%


\documentclass{article}



\usepackage[preprint]{modified_nips2018}

\usepackage{hyperref}
\input{preamble}





\title{Moment Matching Training for Neural Machine Translation: A Preliminary Study}

\author{Cong Duy Vu Hoang\thanks{\noindent Work done during an internship at NAVER Labs Europe, Grenoble, France}  \\
  Computing and Information Systems \\
  University of Melbourne, Australia \\
  {\tt vhoang2@student.unimelb.edu.au} \\\And
  Ioan Calapodescu \qquad\qquad Marc Dymetman\\
  NAVER Labs Europe, Grenoble, France\\
  {\tt ioan.calapodescu@naverlabs.com} \\
  {\tt marc.dymetman@naverlabs.com}}

\date{}

\begin{document}

\maketitle


\begin{abstract}
In previous works, neural sequence models have been shown to improve significantly if external prior knowledge can be provided, for instance by allowing the model to access the embeddings of explicit features during both training and inference. 
In this work, we propose a different point of view on how to incorporate prior knowledge in a principled way, using a \emph{moment matching} framework. In this approach, the standard local cross-entropy training of the  sequential model is combined with a \emph{moment matching} training mode that encourages the equality of the expectations of certain predefined features between the model distribution and the empirical distribution. In particular, we show how to derive unbiased estimates of some stochastic gradients that are central to the training, and compare our framework with a formally related one: policy gradient training in reinforcement learning, pointing out some important differences in terms of the kinds of prior assumptions in both approaches. 
Our initial results are promising, showing the effectiveness of our proposed framework. 
\end{abstract}

\section{Introduction and Motivation}

Standard training of neural sequence to sequence (seq2seq) models requires the construction of a cross-entropy loss \citep{Sutskever:2014:SSL:2969033.2969173,2015arXiv150600019L}. 
This loss normally manipulates at the level of generating individual tokens in the target sequence, hence, potentially suffering from label or observation bias \citep{wiseman-rush:2016:EMNLP2016,DBLP:journals/corr/PereyraTCKH17}. 
Thus, it might be difficult for neural seq2seq models to capture the semantics at sequence level.   
This may be detrimental when the desired generated sequence may be missing or lacking some desired properties, for example, avoiding repetitions, preserving the consistency between source and target length ratio, or satisfying biasedness upon some external evaluation measures such as ROUGE \citep{W04-1013} and BLEU \citep{Papineni:2002:BMA:1073083.1073135} in summarisation and translation tasks, respectively; or avoiding omissions and additions of semantic materials in natural language generation; etc. 
Sequence properties, on the other hand, may be associated with prior knowledge about the sequence the model aims to generate. 

In fact, the cross-entropy loss with sequence constraints is intractable. 
In order to inject such prior knowledge into seq2seq models, methods in reinforcement learning (RL) \citep{Sutton1998ReinforcementLA} emerge as reasonable choices.  
In principle, RL is a general-purpose framework applying for sequential decision making processes. 
In RL, an agent interacts with an environment $\mathcal{E}$ over  a certain number of discrete timesteps \citep{Sutton1998ReinforcementLA}. 
The ultimate goal of the agent is to select any action according to a policy $\pi$ that maximises a future cumulative reward. 
This reward is the objective function of RL guided by the policy $\pi$, and is defined specifically for the application task. 
Considering seq2seq models in our case,  an action of choosing the next word prediction is guided by a stochastic policy and  receives a task-specific reward with a real value return. 
The agent tries to maximise the expected reward for $T$ timesteps, e.g., $\mathcal{R} = \sum_{t=1}^{T} r_t$. 
The idea of RL  has recently been applied to a variety of neural seq2seq tasks. 
For instance, \cite{DBLP:journals/corr/RanzatoCAZ15} applied this  idea to abstractive summarisation with neural seq2seq models, using the ROUGE evaluation measure \citep{W04-1013} as a reward. 
Similarly, some success also has been achieved for neural machine translation, e.g., \citep{DBLP:journals/corr/RanzatoCAZ15, NIPS2016_6469, NIPS2017_6622}. 
\cite{DBLP:journals/corr/RanzatoCAZ15} and \cite{NIPS2017_6622} used BLEU score \citep{Papineni:2002:BMA:1073083.1073135} as a reward function in their RL setups; whereas \cite{NIPS2016_6469} used a reward interpolating the probabilistic scores from reverse  translation  and language models. 

\subsection{Why Moment Matching?}
The main motivation of moment matching (MM) is to inject prior knowledge into the model which takes the properties of whole sequences into consideration. 
We aim to develop a generic method that is applicable for any seq2seq models. 

Inspired from the method of moments in statistics,\footnote{https://en.wikipedia.org/wiki/Method\_of\_moments\_(statistics)} we propose the following moment matching approach. 
The underlying idea of moment matching is to seek optimal parameters reconciling two distributions, namely: one from the samples generated by the model and another one from the empirical data. 
Those distributions aim to evaluate the generated sequences as a whole, via the use of feature functions or constraints that one would like to behave similarly between the two distributions, based on the encoding of the prior knowledge about sequences. 
It is worth noting that this proposed moment matching technique is not stand-alone, but to be used in alternation or combination with standard cross-entropy training. 
This is similar to the way RL is typically applied in seq2seq models \citep{DBLP:journals/corr/RanzatoCAZ15}. 

Here, we will discuss some important differences with RL, then we will present the details on how the MM technique works in the next sections. 

The \emph{first difference} is that RL assumes that one has defined some reward function $\mathcal{R}$, which is done quite independently of what the training data tells us. 
By contrast, MM only assumes that one has defined certain features that are deemed important for the task, but one then relies on the actual training data to tell us how to use these features. 
One could say that the ``arbitrariness'' in MM is just in the choice of the features to focus on, while the arbitrariness in RL is that we want the model to get a good reward, even if that reward is not connected to the training data at all.

Suppose that we are in the context of NLG and are trying to reconcile several objectives at the same time, such as (1) avoiding omissions of semantic material, (2) avoiding additions of semantic material, (3) avoiding repetitions \citep{W17-5519}. 
In general, in order to address this kind of problem in an RL framework, we need to ``invent'' a reward function based on certain computable features of the model outputs which in particular means inventing a formula for combining the different objectives we have in mind into a single real number. 
This can be a rather arbitrary process, and potentially it does not guarantee any fit with actual training data.  
The point of MM is that the only arbitrariness is in choosing the features to focus on, but after that it is actual training data that tells us what should be done.

The \emph{second difference} is that RL tries to maximize a reward, and is only sensitive to the rewards of individual instances, while MM tries to maximize the fit of the model distribution with that of the empirical distribution, where the fit is on specific features.

For instance, this difference is especially clear in the case of language modelling where RL will try to find a model that is strongly peaked on the $x$ which has the strongest reward (assuming no ties in the rewards), while MM will try to find a distribution over $x$ which has certain properties in common with the empirical distribution, e.g., for generating diverse outputs.  
For language modelling, RL is a strange method, because language modelling requires the model to be able to produce different outputs; for MT, the situation is a bit less clear, in case one wanted to argue that for each source sentence, there is a single best translation; but in principle, the observation also holds for MT, which is a conditional language model. 

\section{Proposed Model}
In this section, we will describe our formulation of moment matching for seq2seq modeling in detail. 

\subsection{Moment Matching for Sequence to Sequence  Models}
Recall the sequence-to-sequence problem whose goal is to generate an output sequence given an input sequence. 
In the context of neural machine translation - which is our main focus here, the input sequence is a source language sentence, and the output sequence is a target language sentence. 

Suppose that we are modeling the target sequence $\vy=y_1,\ldots,y_t,\ldots,y_{|\vy|}$ given a source sequence $\vx=x_1,\ldots,x_t,\ldots,x_{|\vx|}$, using a sequential process $\operatorname{p}_{\Theta}\left(\vy|\vx\right)$.  
This sequential process can be implemented via a neural mechanism, e.g., recurrent neural networks within an (attentional) encoder - decoder framework \citep{bahdanau:ICLR:2015} or a transformer framework \citep{NIPS2017_7181}. 
Regardless of its implementation, such a neural mechanism depends on model parameters $\Theta$. 

Our proposal is that we would like our sequential process to satisfy some moment constraints.  
Such moment constraints can be modeled based on features that encode prior (or external) knowledge or semantics about the generated target sentence. 
Mathematically, features can be represented through vectors, e.g., 
$\vPhi\left(\vy|\vx\right) \equiv \left(\phi_1\left(\vy|\vx\right),\ldots,\phi_j\left(\vy|\vx\right),\ldots,\phi_{m}\left(\vy|\vx\right)\right)$
, where $\phi_j\left(\vy|\vx\right)$ is the $j^{th}$ conditional feature function of a target sequence $\vy$ given a source sequence $\vx$, and $m$ is number of features or moment constraints. 
Considering a simple example where the moment feature is for controlling the length of a target sequence - which would just return a number of elements in that target sequence. 

\subsection{Formulation of the MM Objective Function}
In order to incorporate such constraints into the seq2seq learning process, we introduce a new objective function, namely the moment matching loss $\mathcal{J}_{MM}$. 
Generally speaking, given a  vector of features $\vPhi\left(\vy|\vx\right)$, the goal of moment matching loss  is to encourage the identity of the model average estimate,  
\begin{equation}
\begin{split}
\hat{\vPhi}_n\left( \Theta \right) \equiv \mathbb{E}_{\vy \sim \operatorname{p}_{\Theta}\left(.|\vx_n\right)}\left[ \vPhi\left(\vy|\vx_n\right) \right] \nonumber
\end{split}
\end{equation}
with the empirical average estimate, 
\begin{equation}
\begin{split}
\bar{\vPhi}_n \equiv \mathbb{E}_{\vy \sim \operatorname{p}_{\mathcal{D}}\left(.|\vx_n\right)}\left[ \vPhi\left(\vy|\vx_n\right) \right]; 
\end{split}
\end{equation}
where $\mathcal{D}$ is the training data; $\vx, \vy \in \mathcal{D}$ are source and target sequences, respectively; $n$ is the data index in $\mathcal{D}$. 
This can be formulated as  minimising a squared distance between the two distributions with respect to model parameters $\Theta$: 
\begin{equation}\label{eqn:mm_1}
\begin{split}
\mathcal{J}_{MM} \left( \Theta \right) &:= \frac{1}{N} \sum_{n=1}^{N} \norm{\hat{\vPhi}_n\left( \Theta \right) - \bar{\vPhi}_n}^2_2 \\
&= \frac{1}{N} \sum_{n=1}^{N} \norm{\mathbb{E}_{\vy \sim \operatorname{p}_{\Theta}\left(.|\vx_n\right)}\left[ \vPhi\left(\vy|\vx_n\right) \right] - \mathbb{E}_{\vy \sim \operatorname{p}_{\mathcal{D}}\left(.|\vx_n\right)}\left[ \vPhi\left(\vy|\vx_n\right) \right]}^2_2. 
\end{split}
\end{equation}
To be more elaborate, $\hat{\vPhi}_n\left( \Theta \right) \equiv \mathbb{E}_{\vy \sim \operatorname{p}_{\Theta}\left(.|\vx_n\right)}\left[ \vPhi\left(\vy|\vx_n\right) \right]$ is the model average estimate over the samples which are drawn i.i.d. from the model distribution $\operatorname{p}_{\Theta}\left(.|\vx_n\right)$ given the source sequence $\vx_n$, and $\bar{\vPhi}_n \equiv \mathbb{E}_{\vy \sim \operatorname{p}_{\mathcal{D}}\left(.|\vx_n\right)}\left[ \vPhi\left(\vy|\vx_n\right) \right]$ is the empirical average estimate given the $n^{th}$ training instance, where our data are drawn i.i.d. from the empirical distribution $ \operatorname{p}_{\mathcal{D}}\left(.|\vx\right)$.%

\subsection{Derivation of the Moment Matching Gradient}
We now show how to compute the gradient of $\mathcal{J}_{MM}$ in the equation~\ref{eqn:mm_1}, denoted as $\nabla_{\Theta} \mathcal{J}_{MM}$, which will be required in optimisation. 
We first define:
\begin{equation}
\begin{aligned}
\Gamma_{\Theta,n} \equiv \nabla_{\Theta} \left( \norm{ \Delta_n}^2_2  \right), \nonumber
\end{aligned}
\end{equation}
where $\Delta_n \equiv \hat{\vPhi}_n\left( \Theta \right) - \bar{\vPhi}_n$, then the gradient $\nabla_{\Theta} \mathcal{J}_{MM}$ can be computed as:
\begin{equation}\label{eqn:mm_2a}
\begin{aligned}
\nabla_{\Theta} \mathcal{J}_{MM} = \frac{1}{N} \sum_n \Gamma_{\Theta,n}.
\end{aligned}
\end{equation}

Next, we need to proceed with the computation of $\Gamma_{\Theta,n}$. By derivation, we have the following:
\begin{equation} \label{eqn:mm_6b}
\boxed{ \begin{aligned}
\Gamma_{\Theta,n} &= 2\sum_{\vy} \operatorname{p}_{\Theta} \left( \vy|\vx_n \right) \left\langle \hat{\vPhi}_{n}\left(\Theta \right) - \bar{\vPhi}_n, \vPhi\left( \vy|\vx_n \right) - \bar{\vPhi}_n  \right\rangle \nabla_{\Theta} \log \operatorname{p}_{\Theta} \left( \vy|\vx_n \right) \\
&= 2 \mathbb{E}_{\vy \sim \operatorname{p}_{\Theta}\left(.|\vx_n\right)} \left[ \left\langle \hat{\vPhi}_{n}\left(\Theta \right) - \bar{\vPhi}_n, \vPhi\left( \vy|\vx_n \right) - \bar{\vPhi}_n  \right\rangle \nabla_{\Theta} \log \operatorname{p}_{\Theta} \left( \vy|\vx_n \right) \right]. 
\end{aligned} }
\end{equation}

\begin{proof}
Mathematically, we can say that $\Gamma_{\Theta,n} $ is the gradient of the composition $\operatorname{F} \circ \operatorname{G}$ of two functions $\operatorname{F} \left( . \right) =  \norm{.}^2_2 : \mathbb{R}^{m}\rightarrow \mathbb{R}$ and $\operatorname{G} \left( . \right) = \hat{\vPhi}_n\left( . \right) - \bar{\vPhi}_n : \mathbb{R}^{|\Theta|}\rightarrow \mathbb{R}^m$. 

Noting that the gradient $\nabla_{\Theta}\left(\operatorname{F} \circ \operatorname{G}\right)$ is equal to the Jacobian $\mathcal{J}_{\operatorname{F} \circ \operatorname{G}}\left[\Theta\right]$, and applying the chain rule for Jacobians, we have:

\begin{equation}
\label{eqn:mm_3}
\begin{aligned}
\mathcal{J}_{\operatorname{F} \circ \operatorname{G}}\left[\Theta\right] &= \left( \nabla_{\Theta} \norm{\hat{\vPhi}_n\left( \Theta \right) - \bar{\vPhi}_n}^2_2 \right)\left[ \Theta \right] = \mathcal{J}_{\operatorname{F}}\left[ \operatorname{G}\left( \Theta \right) \right] \cdot \mathcal{J}_{\operatorname{G}}\left[ \Theta \right] 
\end{aligned}
\end{equation}

Next, we need the computation for $\mathcal{J}_{\operatorname{F}}\left[ \operatorname{G}\left( \Theta \right) \right]$ and $\mathcal{J}_{\operatorname{G}}\left[ \Theta \right]$ in Equation~\ref{eqn:mm_3}. 
First, we have: 
\begin{equation}\label{eqn:mm_4a}
\begin{split}
\mathcal{J}_{\operatorname{F}}\left[ \operatorname{G}\left( \Theta \right) \right] &= 2\left( \hat{\vPhi}_{n}\left( \Theta \right) - \bar{\vPhi}_{n}  \right) \\
&= 2\left( \hat{\vphi}_{n,1}\left( \Theta \right) - \bar{\vphi}_{n,1},\ \ldots\ , \hat{\vphi}_{n,j}\left( \Theta \right)  - \bar{\vphi}_{n,j},\ \ldots\  ,\hat{\vphi}_{n,m}\left( \Theta \right) - \bar{\vphi}_{n,m} \right), 
\end{split}
\end{equation}
where $\hat{\vPhi}_{n}\left( \Theta \right)$ and $\bar{\vPhi}_{n}$ are vectors of size $m$. 
And we also have:
\begin{equation}\label{eqn:mm_4b}
\resizebox{\hsize}{!}{$
\begin{aligned}
\mathcal{J}_{\operatorname{G}}\left[ \Theta \right] = 
\begin{pmatrix}
   \frac{\partial \mathbb{E}_{\vy \sim \operatorname{p}_{\Theta}\left(.|\vx_n\right)}\left[ \vphi_1\left( \vy|\vx_n \right) - \bar{\vphi}_{n,1} \right]}{\partial \Theta_1} & \ldots & \frac{\partial \mathbb{E}_{\vy \sim \operatorname{p}_{\Theta}\left(.|\vx_n\right)}\left[ \vphi_1\left( \vy|\vx_n \right) - \bar{\vphi}_{n,1} \right]}{\partial \Theta_{|\Theta|}} \\
   \ldots & \frac{\partial \mathbb{E}_{\vy \sim \operatorname{p}_{\Theta}\left(.|\vx_n\right)}\left[ \vphi_j\left( \vy|\vx_n \right) - \bar{\vphi}_{n,j} \right]}{\partial \Theta_i} & \ldots \\
   \frac{\partial \mathbb{E}_{\vy \sim \operatorname{p}_{\Theta}\left(.|\vx_n\right)}\left[ \vphi_m\left( \vy|\vx_n \right) - \bar{\vphi}_{n,m} \right]}{\partial \Theta_1} & \ldots & \frac{\partial \mathbb{E}_{\vy \sim \operatorname{p}_{\Theta}\left(.|\vx_n\right)}\left[ \vphi_m\left( \vy|\vx_n \right) - \bar{\vphi}_{n,m} \right]}{\partial \Theta_{|\Theta|}}
\end{pmatrix} 
= \mathcal{M}
\end{aligned}
$}
\end{equation}

A key part of these identities in Equation~\ref{eqn:mm_4b} is  the value of $\frac{\partial \mathbb{E}_{\vy \sim \operatorname{p}_{\Theta}\left(.|\vx_n\right)}\left[ \vphi_j\left( \vy|\vx_n \right) - \bar{\vphi}_{n,j} \right]}{\partial \Theta_i}$ which can be expressed as:
\begin{equation}\label{eqn:mm_5a}
\begin{split}
\mathcal{M}_{ij} = \frac{\partial \mathbb{E}_{\vy \sim \operatorname{p}_{\Theta}\left(.|\vx_n\right)}\left[ \vphi_j\left( \vy|\vx_n \right) - \bar{\vphi}_{n,j} \right]}{\partial \Theta_i} &= \frac{\partial \sum_{\vy} \operatorname{p}_{\Theta}\left( \vy|\vx_n \right) \left( \vphi_j\left( \vy|\vx_n \right) - \bar{\vphi}_{n,j} \right) }{\partial \Theta_i} \\
&= \sum_{\vy} \left( \vphi_j\left( \vy|\vx_n \right) - \bar{\vphi}_{n,j} \right) \frac{\partial \operatorname{p}_{\Theta}\left( \vy|\vx_n \right)}{\partial \Theta_i} \\
\end{split}
\end{equation}

Next, using the well-known ``log-derivative trick'':
\[
\operatorname{p}_{\Theta}\left( \vy|\vx \right) \frac{\partial \log \operatorname{p}_{\Theta}\left( \vy|\vx \right) }{\partial \Theta_i} = \frac{\partial \operatorname{p}_{\Theta}\left( \vy|\vx \right)}{\partial\Theta_i}
\]
from the Policy Gradient technique in reinforcement learning \citep{NIPS1999_1713}, we can rewrite the equation~\ref{eqn:mm_5a} as follows:
\begin{equation}\label{eqn:mm_5b}
\begin{split}
\mathcal{M}_{ij} &= \sum_{\vy} \left( \vphi_j\left( \vy|\vx_n \right) - \bar{\vphi}_{n,j} \right) \operatorname{p}\left( \vy|\vx_n \right) \frac{\partial \log \operatorname{p}_{\Theta}\left( \vy|\vx_n \right)}{\partial \Theta_i} \\
&= \mathbb{E}_{\vy \sim \operatorname{p}_{\Theta}\left(.|\vx_n\right)} \left[ \left( \vphi_j\left( \vy|\vx_n \right) - \bar{\vphi}_{n,j} \right) \frac{\partial \log \operatorname{p}_{\Theta}\left( \vy|\vx_n \right)}{\partial \Theta_i} \right].
\end{split}
\end{equation}

Combining Equations~\ref{eqn:mm_5a},~\ref{eqn:mm_5b}, we have:
\begin{equation*}
\begin{aligned}
&\frac{\partial \mathbb{E}_{\vy \sim \operatorname{p}_{\Theta}\left(.|\vx_n\right)}\left[ \vphi_j\left( \vy|\vx_n \right) - \bar{\vphi}_{n,j} \right]}{\partial \Theta_i} = 
\mathbb{E}_{\vy \sim \operatorname{p}_{\Theta}\left(.|\vx_n\right)} \left[ \left( \vphi_j\left( \vy|\vx_n \right) - \bar{\vphi}_{n,j} \right) \frac{\partial \log \operatorname{p}_{\Theta}\left( \vy|\vx_n \right)}{\partial \Theta_i} \right]\ ,
\end{aligned}
\end{equation*}
so in turn we obtain the computation of $\mathcal{J}_{\operatorname{G}}\left[ \Theta \right]$.  
Note that the expectation $\mathbb{E}_{\vy \sim \operatorname{p}_{\Theta}\left(.|\vx_n\right)}\left[ . \right]$ is easy to sample and the gradient $\frac{\partial \log \operatorname{p}_{\Theta}\left( \vy|\vx_n \right)}{\partial \Theta_i}$ is easy to evaluate as well. 

Since we already have the computations of $\mathcal{J}_{\operatorname{F}}\left[ \operatorname{G}\left( \Theta \right) \right]$ and $\mathcal{J}_{\operatorname{G}}\left[ \Theta \right]$, we can finalise the gradient computation $\Gamma_{\Theta,n}$ as follows:
\begin{equation*} 
\begin{aligned}
\Gamma_{\Theta,n} &= 2\left( \hat{\vphi}_{n,1}\left( \Theta \right) - \bar{\vphi}_{n,1},\ldots,\hat{\vphi}_{n,j}\left( \Theta \right) - \bar{\vphi}_{n,j},\ldots,\hat{\vphi}_{n,m}\left( \Theta \right) - \bar{\vphi}_{n,m} \right) \\ 
 &\cdot  \begin{pmatrix}
   \ldots & \ldots & \ldots \\
   \ldots &  \mathbb{E}_{\vy \sim \operatorname{p}_{\Theta}\left(.|\vx_n\right)} \left[ \left( \vphi_j\left( \vy|\vx_n \right) - \bar{\vphi}_{n,j} \right) \frac{\partial \log \operatorname{p}_{\Theta}\left( \vy|\vx_n \right)}{\partial \Theta_i} \right] & \ldots \\
   \ldots & \ldots & \ldots
\end{pmatrix} \\
&= 2\left( \hat{\vphi}_{n,1}\left( \Theta \right) - \bar{\vphi}_{n,1},\ldots,\hat{\vphi}_{n,j}\left( \Theta \right) - \bar{\vphi}_{n,j},\ldots,\hat{\vphi}_{n,m}\left( \Theta \right) - \bar{\vphi}_{n,m} \right) \\ 
 &\cdot \left[ \sum_{\vy} \operatorname{p}_{\Theta} \left( \vy|\vx_n \right)  \begin{pmatrix}
   \ldots & \ldots & \ldots \\
   \ldots &  \left( \vphi_j\left( \vy|\vx_n \right) - \bar{\vphi}_{n,j} \right) \frac{\partial \log \operatorname{p}_{\Theta}\left( \vy|\vx_n \right)}{\partial \Theta_i} & \ldots \\
   \ldots & \ldots & \ldots
\end{pmatrix} \right] \\
&= 2\sum_{\vy} \operatorname{p}_{\Theta} \left( \vy|\vx_n \right) \left( \hat{\vphi}_{n,1}\left( \Theta \right) - \bar{\vphi}_{n,1},\ldots,\hat{\vphi}_{n,j}\left( \Theta \right) - \bar{\vphi}_{n,j},\ldots,\hat{\vphi}_{n,m}\left( \Theta \right) - \bar{\vphi}_{n,m} \right)  \\ 
 &\cdot \begin{pmatrix}
   \ldots & \ldots & \ldots \\
   \ldots &  \left( \vphi_j\left( \vy|\vx_n \right) - \bar{\vphi}_{n,j} \right) \frac{\partial \log \operatorname{p}_{\Theta}\left( \vy|\vx_n \right)}{\partial \Theta_i} & \ldots \\
   \ldots & \ldots & \ldots
\end{pmatrix} \\
&= 2\sum_{\vy} \operatorname{p}_{\Theta} \left( \vy|\vx_n \right) \left\langle \hat{\vPhi}_{n}\left(\Theta \right) - \bar{\vPhi}_n, \vPhi\left( \vy|\vx_n \right) - \bar{\vPhi}_n  \right\rangle \nabla_{\Theta} \log \operatorname{p}_{\Theta} \left( \vy|\vx_n \right) \\
&= 2\mathbb{E}_{\vy \sim \operatorname{p}_{\Theta}\left(.|\vx_n\right)} \left[ \left\langle \hat{\vPhi}_{n}\left(\Theta \right) - \bar{\vPhi}_n, \vPhi\left( \vy|\vx_n \right) - \bar{\vPhi}_n  \right\rangle \nabla_{\Theta} \log \operatorname{p}_{\Theta} \left( \vy|\vx_n \right) \right].
\end{aligned}
\end{equation*}

By the reasoning just made, we can obtain the computation of $\Gamma_{\Theta,n}$ which is the central formula of the proposed moment matching technique. 
\end{proof}

\begin{table}[t]
\centering
\begin{tabular}{@{\extracolsep{4pt}}lcc}
\toprule   
{\bf Method} & {\bf Formulation} & {\bf Note} \\
\midrule
\multicolumn{3}{c}{\emph{Unconditional} Case} \\
\midrule
CE & $\nabla_{\Theta} \log \operatorname{p}_{\Theta} \left( \vy \right)$ &  $\vy \sim \mathcal{D}$ \\ 
RL w/ PG & $\mathcal{R} \left(\vy\right) \nabla_{\Theta} \log \operatorname{p}_{\Theta} \left( \vy \right)$  & $\vy \sim \operatorname{p}_{\Theta} \left( . \right)$ \\ 
MM & $\left\langle \hat{\vPhi}\left(\Theta \right) - \bar{\vPhi}, \vPhi\left( \vy \right) - \bar{\vPhi}  \right\rangle \nabla_{\Theta} \log \operatorname{p}_{\Theta} \left( \vy \right)$ & $\vy \sim \operatorname{p}_{\Theta} \left( . \right)$ \\ 
\midrule
\multicolumn{3}{c}{\emph{Conditional} Case} \\
\midrule
CE & $\nabla_{\Theta} \log \operatorname{p}_{\Theta} \left( \vy \given \vx \right)$ &  $\vx, \vy \sim \mathcal{D}$ \\ 
RL w/ PG & $\mathcal{R} \left(\vy\right) \nabla_{\Theta} \log \operatorname{p}_{\Theta} \left( \vy \given \vx \right)$  & $\vx \sim \mathcal{D}, \vy \sim \operatorname{p}_{\Theta} \left( . | \vx \right)$ \\ 
MM & $\left\langle \hat{\vPhi}\left(\Theta \right) - \bar{\vPhi}, \vPhi\left( \vy|\vx \right) - \bar{\vPhi}  \right\rangle \nabla_{\Theta} \log \operatorname{p}_{\Theta} \left( \vy \given \vx \right)$ & $\vx \sim \mathcal{D}; \vy \sim \operatorname{p}_{\Theta} \left( . | \vx \right)$ \\ 
\bottomrule
\end{tabular}
\caption{Comparing different methods for training seq2seq models. Note that we denote CE for cross-entropy, RL for reinforcement learning, PG as policy gradient, and MM for moment matching.} 
\label{tab:uc_ce_vs_rl_mm}
\end{table}

\subsection{MM training vs CE training vs RL training with Policy Gradient}
\label{sec:ce_rl_mm_compare}
Based on equation \ref{eqn:mm_6b}, and ignoring the constant factor, 
we can use as our gradient update, for each pair $\left( \vx_n, \vy_{j} \sim \operatorname{p}_{\Theta} \left( .|\vx_n \right) \right)$ ($j \in \left[ 1,J \right]$) the value
\[
\underbrace{\langle \hat{\vPhi}_{n,\vy}\left(\Theta \right) - \bar{\vPhi}_n, \vPhi\left( \vy|\vx_n \right) - \bar{\vPhi}_n  \rangle}_\text{multiplicative score} \underbrace{\nabla_{\Theta} \log \operatorname{p}_{\Theta} \left( \vy|\vx_n \right)}_\text{standard gradient update},
\]
where $\bar{\vPhi}_n$, the empirical average of $\vPhi\left( \cdot|\vx_n \right)$, can be estimated through the observed value $\vy_n$, i.e. $\bar{\vPhi}_n \simeq \vPhi\left( \vy_n|\vx_n \right)$.

Note that the above gradient update draws a very close connection to RL with policy gradient method \citep{NIPS1999_1713} where the ``multiplication score'' plays a similar role to the reward $\mathcal{R}\left(\vy|\vx\right)$; however, unlike RL training using a predefined reward 
, the major difference in MM training is that MM's multiplication score does depend on the model parameters $\Theta$ and looks at what the empirical data  tells the model via using explicit prior features. 
Table~\ref{tab:uc_ce_vs_rl_mm} compares the differences among three methods, namely CE, RL with Policy Gradient (PG) and our proposal MM, for neural seq2seq models in both unconditional (e.g., language modeling) and conditional (e.g., NMT, summarisation) cases. 

\subsection{Computing the Moment Matching Gradient}
We have derived the gradient of moment of matching loss as shown in Equation~\ref{eqn:mm_6b}. 
In order to compute it, we still need to have the evaluation of two estimates, namely the model average estimate $\hat{\vPhi}_{n}\left(\Theta \right)$ and the empirical average estimate $\bar{\vPhi}_n$. 

\paragraph{Empirical Average Estimate.} First, we need to estimate the empirical average $\bar{\vPhi}_n$. 
In the general case, given a source sequence $\vx_n$, suppose there are multiple target sequences $\vy \in \mathcal{Y}$ associated with $\vx_n$, then $\bar{\vphi}_n \equiv \frac{1}{|\mathcal{Y}|} \sum_{\vy \in \mathcal{Y}} \vphi\left( \vy|\vx_n \right)$. Specifically, when we have only one reference sequence $\vy$ per source sequence $\vx_n$, then $\bar{\vphi}_n \equiv \vphi\left( \vy|\vx_n \right)$ --- which is the standard case in the context of neural machine translation training. 

\paragraph{Model Average Estimate.} In practice, it is impossible to obtain a full computation of $\Gamma_{\Theta,n}$  due to intractable search of $\vy$. 
Therefore, we resort to estimate $\Gamma_{\Theta,n}$ by a sampling process. 
There are possible options for doing this. 

The {\bf simplistic} approach to that would be to:

First, estimate the model average $\hat{\vPhi}_{n} \left( \Theta \right)$ by sampling $\vy^{(1)},\vy^{(2)},\ldots,\vy^{(K)}$ and then estimating:
\[
\hat{\vPhi}_{n} \left( \Theta \right) \approx \frac{1}{K} \sum_{k \in \left[1,K\right]} \vPhi \left( \vy^{(k)} | \vx_n \right). 
\]
Next, estimate the expectation $\mathbb{E}_{\vy \sim \operatorname{p}_{\Theta}}$ in Equation~\ref{eqn:mm_6b} by independently sampling $J$ second values of $\vy$, and then estimate: 
\[
\Gamma_{\Theta,n} \approx \frac{1}{J} \sum_{j \in \left[1,J\right]} \left\langle \hat{\vPhi}_{n}\left(\Theta \right) - \bar{\vPhi}_n, \vPhi\left( \vy^{(j)} | \vx_n \right) - \bar{\vPhi}_n  \right\rangle \nabla_{\Theta} \log \operatorname{p}_{\Theta} \left( \vy^{(j)} | \vx_n \right).
\]
Note that two sets of samples are separate and $J \neq K$. This would provide an \underline{unbiased} estimate of $\Gamma_{\Theta,n}$, but at the cost of producing two independent sample sets of sizes $K$ and $J$, used for two different purposes --- which would be computationally wasteful. 

A more {\bf economical} approach might consist in using the \emph{same} sample set of size $J$ for both purposes. 
However, this would produce a \underline{biased} estimate of $\Gamma_{\Theta,n}$. 
This can be illustrated by considering the estimate case with $J=1$. 
In this case, the dot product $\left\langle \hat{\vPhi}_{n}\left(\Theta \right) - \bar{\vPhi}_n, \vPhi\left( \vy^{(1)} | \vx_n \right) - \bar{\vPhi}_n  \right\rangle$ in $\Gamma_{\Theta,n}$ is strictly positive since $\hat{\vPhi}_{n}\left(\Theta \right)$ is equal to $\vPhi\left( \vy^{(1)} | \vx_n \right)$; hence, in this case, the current sample $\vy^{(1)}$ to be systematically discouraged by the model. 

Here, we proposed a {\bf better} approach, resulting in an \textbf{unbiased} estimate of $\Gamma_{\Theta,n}$, formulated as follows:

First, we sample $J$ values of $\vy$: $\vy^{(1)},\vy^{(2)},\ldots,\vy^{(J)}$ with $J\geq2$, then: 
\begin{equation} \label{eqn:mm_7a}
\begin{aligned}
\Gamma_{\Theta,n} \approx \frac{1}{J} \sum_{j \in \left[1,J\right]} \left\langle \hat{\vPhi}_{n,j}\left(\Theta \right) - \bar{\vPhi}_n, \vPhi\left( \vy^{(j)} | \vx_n \right) - \bar{\vPhi}_n  \right\rangle \nabla_{\Theta} \log \operatorname{p}_{\Theta} \left( \vy^{(j)} | \vx_n \right),
\end{aligned}
\end{equation}
where 
\begin{equation} \label{eqn:mm_7b}
\begin{aligned}
\hat{\vPhi}_{n,j}\left(\Theta \right) \equiv \frac{1}{J-1} \sum_{\substack{j' \in \left[1,J\right] \\ j' \neq j}} \vPhi\left( \vy^{(j')} | \vx_n \right). 
\end{aligned}
\end{equation}

We can then prove that this computation provides an unbiased estimate of $\Gamma_{\Theta,n}$ (see \S\ref{proof_unbiasedness}).  
Note that here, we have exploited the same J samples for both purposes but have taken care of not exploiting the exact same $\vy^{(j)}$ for both --- akin to a Jackknife resampling estimator.\footnote{https://en.wikipedia.org/wiki/Jackknife\_resampling} 

\subsection{Proof of Unbiasedness}
For simplicity, we will consider here the ``unconditional case'', e.g.,  $\operatorname{p}\left( . \right)$ instead of $\operatorname{p}\left( . | \vx \right)$, but the conditional case follows easily.

\begin{lemma} \label{lm:lm_1}
Let $\operatorname{p} \left( . \right)$ be a probability distribution over $\vy$, and let $\zeta\left( \vy \right)$ be any function of $\vy$ and $\vPhi\left( \vy \right)$ is a feature vector over $\vy$. 

We wish to compute the quantity $\mathcal{A} = \mathbb{E}_{\vy \sim \operatorname{p}\left(.\right)} \left[ \langle \hat{\vPhi},\vPhi\left(\vy\right) \rangle \zeta\left(\vy\right) \right]$, where $\hat{\vPhi} \equiv \mathbb{E}_{\vy \sim \operatorname{p}\left(.\right)}\left[\vPhi\left(\vy\right)\right]$. 

Let us sample $J$ sequences $\vy^{(1)}$,\ldots,$\vy^{(J)}$ (where $J$ is a pre-defined number of generated samples) independently from $\operatorname{p}\left( . \right)$, and let us compute:
\begin{equation*} 
\begin{aligned}
\mathcal{B}\left( \vy^{(1)},\ldots,\vy^{(J)}  \right) \equiv \frac{1}{J} \left[ \langle \tilde{\vPhi}^{(-1)}, \vPhi\left(\vy^{(1)}\right) \rangle \zeta\left( \vy^{(1)} \right) + \ldots + \langle \tilde{\vPhi}^{(-J)}, \vPhi\left(\vy^{(J)}\right) \rangle \zeta\left( \vy^{(J)} \right) \right] ,
\end{aligned}
\end{equation*}
where $ \tilde{\vPhi}^{(-i)}$ is formulated as:
\[
\tilde{\vPhi}^{(-i)} \equiv \frac{1}{J-1} \left[ \vPhi\left(\vy^{(1)}\right) + \ldots + \vPhi\left(\vy^{(i-1)}\right) + \vPhi\left(\vy^{(i+1)}\right) + \ldots + \vPhi\left(\vy^{(J)}\right) \right].
\]

\noindent Then we have :
\begin{equation*} 
\mathcal{A} = \mathbb{E}_{\{ \vy^{(i)} \}_{i=1}^{J} \overset{iid}{\sim} \operatorname{p}\left(.\right)} \left[ \mathcal{B} \left( \vy^{(1)},\ldots,\vy^{(J)}  \right)  \right], 
\end{equation*}
in the other words, $\mathcal{B} \left( \vy_1,\ldots,\vy_J  \right)$ provides an \emph{unbiased} estimate of $\mathcal{A}$.
\end{lemma}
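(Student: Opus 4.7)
The plan is to exploit the leave-one-out (jackknife) structure of $\tilde{\vPhi}^{(-i)}$: since $\tilde{\vPhi}^{(-i)}$ averages only the samples $\vy^{(j)}$ with $j \neq i$, it is statistically independent of $\vy^{(i)}$ under the i.i.d. sampling assumption. This independence is exactly what will let the expectation factor through the inner product and recover $\hat{\vPhi}$ in the right slot. First, I would apply linearity of expectation to $\mathcal{B}$, writing
\begin{equation*}
\mathbb{E}\bigl[\mathcal{B}(\vy^{(1)},\ldots,\vy^{(J)})\bigr] = \frac{1}{J}\sum_{i=1}^{J} \mathbb{E}\bigl[ \langle \tilde{\vPhi}^{(-i)}, \vPhi(\vy^{(i)}) \rangle\, \zeta(\vy^{(i)}) \bigr],
\end{equation*}
and treating each of the $J$ summands separately.

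Next, for a fixed index $i$, I would condition on $\vy^{(i)}$ using the tower property. Since $\vPhi(\vy^{(i)})$ and $\zeta(\vy^{(i)})$ are measurable with respect to $\vy^{(i)}$, they can be pulled outside the inner conditional expectation; the bilinearity of $\langle \cdot, \cdot \rangle$ then allows moving the conditional expectation inside the left slot of the inner product. Because the remaining samples $\{\vy^{(j)} : j \neq i\}$ are drawn i.i.d.\ from $\operatorname{p}(.)$ and are independent of $\vy^{(i)}$, we get
\begin{equation*}
\mathbb{E}\bigl[\tilde{\vPhi}^{(-i)} \,\big|\, \vy^{(i)}\bigr] = \mathbb{E}\bigl[\tilde{\vPhi}^{(-i)}\bigr] = \frac{1}{J-1}\sum_{j \neq i} \mathbb{E}[\vPhi(\vy^{(j)})] = \hat{\vPhi}.
\end{equation*}
Substituting back and taking the outer expectation over $\vy^{(i)} \sim \operatorname{p}(.)$ yields $\mathbb{E}[\langle \tilde{\vPhi}^{(-i)}, \vPhi(\vy^{(i)}) \rangle\, \zeta(\vy^{(i)})] = \mathbb{E}[\langle \hat{\vPhi}, \vPhi(\vy^{(i)}) \rangle\, \zeta(\vy^{(i)})] = \mathcal{A}$, which is independent of $i$. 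Averaging over $i \in [1,J]$ gives $\mathbb{E}[\mathcal{B}] = \mathcal{A}$.

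The proof is essentially a clean application of independence and bilinearity; the only real pitfall — and the step worth being explicit about — is justifying the commutation of expectation with the inner product. I would handle this componentwise, using the elementary fact that for any random vector $U$ independent of a $\sigma(\vy^{(i)})$-measurable vector $V$, one has $\mathbb{E}[\langle U, V \rangle \mid \vy^{(i)}] = \langle \mathbb{E}[U], V \rangle$. The conceptual obstacle to watch for is the one already flagged in the excerpt: if one were to replace $\tilde{\vPhi}^{(-i)}$ by the full-sample average $\hat{\vPhi}_{n}\left( \Theta \right)$ containing $\vy^{(i)}$ itself, the independence between the two factors in the inner product would break, and a spurious positive bias would appear (visible already in the degenerate $J=1$ case). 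Flagging this contrast makes clear why the leave-one-out construction is the precise fix that restores unbiasedness.
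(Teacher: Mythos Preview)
Your proposal is correct and follows essentially the same argument as the paper's proof: both use linearity to reduce to a single index $i$, then exploit the independence of $\tilde{\vPhi}^{(-i)}$ from $\vy^{(i)}$ to replace $\tilde{\vPhi}^{(-i)}$ by its mean $\hat{\vPhi}$, after which each summand equals $\mathcal{A}$. The only difference is presentational: the paper writes out the product measure as explicit nested sums and marginalizes term by term, whereas you phrase the same computation via conditioning and the tower property.
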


\begin{proof}
See Appendix. 
\end{proof}

To ground this lemma in our problem setting, consider the case where $\operatorname{p} = \operatorname{p}_{\Theta}$, and $\zeta(\vy) = \nabla_{\Theta} \log \operatorname{p}_{\Theta} \left( \vy \right)$, then the quantity $\mathcal{A} = \mathbb{E}_{\vy \sim \operatorname{p}\left(.\right)} \left[ \langle \hat{\vPhi},\vPhi\left(\vy\right) \rangle \zeta\left(\vy\right) \right]$ is equal  to the overall gradient of the MM loss, for a given value of the model parameters $\Theta$ (by the formula (\ref{eqn:mm_6b}) obtained earlier, and up to a constant factor). 
We would like to obtain an unbiased stochastic gradient estimator of this gradient, in other words, we want to obtain an unbiased estimator of the quantity $\mathcal{A}$. 

By Lemma~\ref{lm:lm_1}, $\mathcal{A}$ is equal to the expectation of $\mathcal{B} \left( \vy^{(1)},\ldots,\vy^{(J)}  \right)$, where $\vy^{(1)},\ldots,\vy^{(J)}$ are drawn i.i.d from distribution $\operatorname{p}$. 
In other words, if we sample one set of $J$ samples from $\operatorname{p}$, and compute $\mathcal{B} \left( \vy^{(1)},\ldots,\vy^{(J)} \right)$, where $\vy^{(1)},\ldots,\vy^{(J)}$ on this set, then \emph{we obtain an unbiased estimate} of $\mathcal{A}$. 
As a result, we obtain an unbiased estimate of the gradient of the overall MM loss, which is exactly what we need.

In principle, therefore, we need to first sample $\vy^{(1)},\ldots,\vy^{(J)}$, and to compute
\begin{equation*} 
\begin{aligned}
&\mathcal{B}\left( \vy^{(1)},\ldots,\vy^{(J)}  \right)  \\
&\equiv \frac{1}{J} \left[ \langle \tilde{\vPhi}^{(-1)}, \vPhi\left(\vy^{(1)}\right) \rangle 
\nabla_{\Theta} \log \operatorname{p}_{\Theta}
\left( \vy^{(1)} \right) + \ldots + \langle \tilde{\vPhi}^{(-J)}, \vPhi\left(\vy^{(J)}\right) \rangle 
\nabla_{\Theta} \log \operatorname{p}_{\Theta}
\left( \vy^{(J)} \right) \right], 
\end{aligned}
\end{equation*}

and then use this quantity as our stochastic gradient. 
In practice, what we do is to first sample $\vy^{(1)},\ldots,\vy^{(J)}$, and then use the components of the \emph{sum}:
$$
\langle \tilde{\vPhi}^{(-j)}, \vPhi\left(\vy^{(j)}\right) \rangle 
\nabla_{\Theta} \log \operatorname{p}_{\Theta}
\left( \vy^{(j)} \right)
$$
as our individual stochastic gradients. 
Note that this computation differs from the original one by a constant factor $\frac{1}{J}$, which can be accounted for by manipulating the learning rate. 

\subsection{Training with the Moment Matching Technique}
Recall the goal of our technique is to preserve certain aspects of generated target sequences according to prior knowledge. 
In principle, the technique does not teach the model how to generate a proper target sequence based on the given source sequence \citep{DBLP:journals/corr/RanzatoCAZ15}. 
For that reason, it has to be used along with standard CE training of seq2seq model. 
In order to train the seq2seq model with the proposed technique, we suggest to use one of two training modes: \emph{alternation} and \emph{interpolation}. 
For the \emph{alternation} mode, the seq2seq model is trained alternatively using both CE loss and moment matching loss. 
More specifically, the seq2seq model is initially trained with CE loss for some iterations, then switches to using moment matching loss; and vice versa. 
For the \emph{interpolation} mode, the model will be trained with the interpolated objective using two losses with an additional hyper-parameter balancing them. 
In summary, the general technique can be described as in Algorithm~\ref{alg:mm_algorithm}. 

\begin{algorithm*}[t]
\begin{algorithmic}[1]
 \State \textbf{Input}: a pre-trained model $\Theta$, parallel training data $\mathcal{D}$, $\lambda$ is balancing factor in interpolation training mode if used
 \For{ $step = 1, \ldots, \textrm{M}$} \Comment{ $M$ is maximum number of steps }
     \State Select a batch of size N source and target sequences in $\vX$ and $\vY$ in $\mathcal{D}$.      
     \If{MM mode is required}
     	\State Sample $J$ translations for the batch of source sequences $\vX$. \Comment{Random sampling has to be used.}
     	\State Compute the total MM gradients according to $\Gamma^{MM}_{\Theta,n} \equiv \mathbb{E}_{\vy \sim \operatorname{p}_{\Theta}\left(.|\vx_n\right)} \left[ \langle \hat{\vPhi}_{n,\vy}\left(\Theta \right) - \bar{\vPhi}_n, \vPhi\left( \vy|\vx_n \right) - \bar{\vPhi}_n  \rangle \nabla_{\Theta} \log \operatorname{p}_{\Theta} \left( \vy|\vx_n \right) \right]$ in Equations~\ref{eqn:mm_7a},~\ref{eqn:mm_7b}.
     \EndIf     
     \If{\emph{alternation mode}}
     	\If{MM mode}
     		\State Update model parameters according to the defined MM gradients $\Gamma^{MM}_{\Theta,n}$ with SGD. 
     	\ElsIf{CE mode}
     		\State Update model parameters according to standard CE based gradients $\Gamma^{CE}_{\Theta,n} \equiv \mathbb{E}_{\vx \sim \vX;\vy \sim \vY} \left[ \nabla_{\Theta} \log \operatorname{p}_{\Theta} \left( \vy|\vx \right) \right]$ with SGD as usual.  
     	\EndIf
     \Else \Comment{interpolation mode}
     	\State Compute the standard CE based gradients $\Gamma^{CE}_{\Theta,n} \equiv \mathbb{E}_{\vx \sim \vX;\vy \sim \vY} \left[ \nabla_{\Theta} \log \operatorname{p}_{\Theta} \left( \vy|\vx \right) \right]$.
     	\State Update model parameters according to $\Gamma^{interpolation}_{\Theta,n} \equiv \Gamma^{CE}_{\Theta,n} + \lambda \Gamma^{MM}_{\Theta,n}$. 
     \EndIf	
     \State After some steps, save model parameters w.r.t best score based on $\mathcal{J}^{dev}_{MM}$ using Equation~\ref{eqn:mm_13}.
 \EndFor
 \State  \Return newly-trained model $\Theta_{new}$ 
 \end{algorithmic}
\caption{General Algorithm for Training with Moment Matching Technique}
 \label{alg:mm_algorithm}
\end{algorithm*}

After some iterations of the algorithm, we can approximate $\mathcal{L}_{MM}$ over the development data (or sampled training data) through:
\begin{equation} \label{eqn:mm_13}
\begin{aligned}
\mathcal{J}^{dev}_{MM} \left( \Theta \right) \approx \frac{1}{N} \sum_{n=1}^{N} \norm{\hat{\vPhi}^{approx}_n\left( \Theta \right) - \bar{\vPhi}_n}^2_2.
\end{aligned}
\end{equation}
We expect $\mathcal{L}_{MM}$  to decrease over iterations, potentially improving the explicit evaluation measure(s), e.g., BLEU \citep{Papineni:2002:BMA:1073083.1073135} in NMT. 

\section{Connections to Previous Work}

\paragraph{Maximum Mean Discrepancies (MMD).}Our MM approach is related to the technique of Maximum Mean Discrepancies, a technique that has been successfully applied to computer vision, e.g., an alternative to learning generative adversarial network \citep{Li:2015:GMM:3045118.3045301,NIPS2017_6815}. 
The MMD is a way to measuring discrepancy between two distributions (for example, the empirical distribution and the model distribution) based on kernel-based similarities. The use of such kernels could potentially be useful in the long term to extend our approach, which can be seen as using a simple linear kernel over our pre-defined features, but in the specific context of seq2seq models, and in tandem with a generative process based on an auto-regressive generative model.

\paragraph{The Method of Moments.}Recently, \cite{DBLP:conf/icml/RavuriMRV18} proposed using a moment matching technique in situations where Maximum Likelihood is difficult to apply.
A strong difference with the way we use MM is that they define feature functions parameterised by some parameters and let them be learned along with model parameters. In fact, they are trying to applying the method of moments to situations in which ML (maximum likelihood, or CE) is not applicable, but where MM can find the correct model distribution on its own. Hence the focus on having (and learning) a large number of features, because only many features will allow to approximate the actual distribution. In our case, we are not relying on MM to model the target distribution on its own. Doing so with a small number of features would be doomed (e.g, thinking of the length ratio feature: it would only guarantee that the translation has a correct length, irrespective of the lexical content). We are using MM to complement ML, in such a way that task-related important features are attended to even if that means getting a (slightly) worse likelihood (or perplexity) on the training set. One can in fact see our use of MM as a form of regularization technique for complementing the MLE training and this is an important aspect of our proposed MM approach.

\section{Preliminary Experiments}
\subsection{Prior Features for NMT}
In order to validate the proposed technique, we re-applied two prior features used for training NMT as in \citep{P17-1139}, including source and target length ratio and lexical bilingual features. \cite{P17-1139} showed in their experiments that these two are the most effective features for improving NMT systems. 

The first feature is straightforward, just about measuring the ratio between source and target length. 
This feature aims at forcing the model to produce translations with consistent length ratio between source and target sentences, in such a way that too short or too long translations will be avoided. 

Given the respective source and target sequences $\vx$ and $\vy$, we define this source and target length ratio feature function $\vPhi_{len\_ratio}$ as follows:
\begin{equation}
\label{eqn:mm_fea_1a}
\vPhi_{len\_ratio} := 
\begin{cases} 
\frac{\beta * |\vx|}{|\vy|} \quad \text{if} \quad \beta \times |\vx| < |\vy|  \\  \frac{|\vy|}{|\beta * |\vx|} \quad \text{otherwise} 
\end{cases},
\end{equation}
where $\beta$ is additional hyper-parameter, normally set empirically based on prior knowledge about source and target languages. In this case, the feature function is a real value. 

The second feature we used is based on a word-to-word lexical translation dictionary produced by an off-the-shelf SMT system (e.g., Moses).\footnote{https://github.com/moses-smt/mosesdecoder} 
The goal of this feature is to ask the model to take external lexical translations into consideration. This feature will be potentially useful in cases such as: translation for rare words, and in low resource setting in which parallel data can be scarce.\footnote{NMT has been empirically found to be less robust in such a setting than SMT. }  
 Following \cite{P17-1139}, we defined sparse feature functions  
 \[
 \vPhi_{bd} \equiv \left[ \vphi_{\langle w_{x_1}, w_{y_1} \rangle},\ldots, \vphi_{\langle w_{x_i}, w_{y_j} \rangle}, \ldots, \vphi_{\langle w_{x_{\mathcal{D}_{lex}}}, w_{y_{\mathcal{D}_{lex}}} \rangle} \right],
 \]
 where:
 \begin{equation*} \label{eqn:mm_fea_1b}
 \vphi_{\langle w_x, w_y \rangle} := 
 \begin{cases} 
 1 \quad \text{if} \quad w_x \in \vx \wedge w_y \in \vy  \\  0 \quad \text{otherwise} 
 \end{cases},
\end{equation*}
 and where $D_{lex}$ is a lexical translation dictionary produced by Moses. 



\subsection{Datasets and Baseline}
We proceed to validate the proposed technique with small-scale experiments. 
We used the IWSLT'15 dataset, translating from English to Vietnamese. 
This dataset is relatively small, containing approximately 133K sentences for training, 1.5K for development , and 1.3K for testing. 
We re-implemented the transformer architecture \citep{NIPS2017_7181} for training our NMT model\footnote{in our open source toolkit: https://github.com/duyvuleo/Transformer-DyNet}  with hyper-parameters: 4 encoder and 4 decoder layers; hidden dimension 512 and dropout probability 0.1 throughout the network.
For the sampling process, we generated 5 samples for each moment matching training step. 
We used interpolation training mode with a balancing hyper-parameter of 0.5. In fact, changing this hyper-parameter only slightly affects the overall result. 
For the feature with length ratio between  source and target sequences, we used the length factor $\beta=1$. 
For the feature with bilingual lexical dictionary, we extracted it by Moses's training scripts. 
In this dictionary, we filtered out the bad entries based on word alignment probabilities produced by Moses, e.g., using a threshold less than 0.5 following \cite{P17-1139}. 

\subsection{Results and Analysis}
\begin{table}[t]
\centering
\begin{tabular}{@{\extracolsep{4pt}}lcc}
\toprule   
{} & {\bf BLEU} & {\bf MM Loss} \\
\midrule
tensor2tensor \citep{NIPS2017_7181} & 27.69 &  \_ \\ 
base (our reimplementation - Transformer-DyNet)  & 28.53 &  0.0094808 \\ 
base+mm &  {\bf 29.17{\bestss}} &  {\bf 0.0068775} \\ 
\bottomrule
\end{tabular}
\caption{Evaluation scores for training moment matching with length ratio between source and target sequences; \textbf{bold}: statistically significantly better than the baselines, \bestss: best performance on dataset.} 
\label{tab:result_lr}
\end{table}
\begin{table}[t]
\centering
\begin{tabular}{@{\extracolsep{4pt}}lcc}
\toprule   
{} & {\bf BLEU} & {\bf MM Loss} \\
\midrule
tensor2tensor  \citep{NIPS2017_7181} & 27.69 &  \_ \\ 
base (our reimplementation - Transformer-DyNet) & 28.53 &  0.7384 \\ 
base+mm & {\bf 29.11{\bestss}} &  {\bf 0.7128} \\ 
\bottomrule
\end{tabular}
\caption{Evaluation scores for training moment matching with bilingual lexical dictionary; \textbf{bold}: statistically significantly better than the baselines, \bestss: best performance on dataset.} 
\label{tab:result_bd}
\end{table}
Our results can be found in Table~\ref{tab:result_lr} and~\ref{tab:result_bd}. 
As can be seen from the tables, as long as the model attempted to reduce the moment matching loss, the BLEU scores \citep{Papineni:2002:BMA:1073083.1073135} improved statistically significantly with $p<0.005$ \citep{koehn:2004:EMNLP}. 
This was consistently shown  in both experiments as an encouraging validation of our  proposed training technique with moment matching. 

\section{Conclusion}
We have shown some nice mathematical properties of the proposed moment matching training technique (in particular, unbiasedness) and believe it is promising. 
Our initial experiments indicate its potential for improving existing NMT systems using simple prior features. 
Future work may include exploiting more advanced features for improving NMT and evaluate our proposed technique on larger-scale datasets. 

\section*{Acknowledgments}
Cong Duy Vu Hoang would like to thank NAVER Labs Europe for supporting his internship; and Reza Haffari and Trevor Cohn for their insightful discussions. Marc Dymetman wishes to thank Eric Gaussier and Shubham Agarwal for early discussions on the topic of moment matching. 

\bibliography{reference-mm}
\bibliographystyle{acl_natbib}

\appendix

\newpage

\section*{Appendix - Proof of Unbiasedness}
\label{sec:supplemental}
\begin{proof}
  Let us define:
\begin{equation} \label{ch7:eqn:mm_10}
\begin{aligned}
\mathcal{E} 
  &= \mathbb{E}_{\{ \vy^{(i)} \}_{i=1}^{J} \overset{iid}{\sim} \operatorname{p}\left(.\right)} \left[ \mathcal{B} \left( \vy^{(1)},\ldots,\vy^{(J)}  \right)  \right] \\
  &= \begin{aligned}  
  \sum_{\vy^{(1)},\ldots,\vy^{(J)}} \operatorname{p}\left(\vy^{(1)}\right)\ldots\operatorname{p}\left(\vy^{(J)}\right) \frac{1}{J} \Bigl[\langle \tilde{\vPhi}^{(-1)}, \vPhi\left(\vy^{(1)}\right) \rangle \zeta\left( \vy^{(1)} \right) + \ldots &+ \langle \tilde{\vPhi}^{(-i)}, \vPhi\left(\vy^{(i)}\right) \rangle \zeta\left( \vy^{(i)} \right) + \ldots \\
  &\ldots + \langle \tilde{\vPhi}^{(-J)}, \vPhi\left(\vy^{(J)}\right) \rangle \zeta\left( \vy^{(J)} \right)  \Bigr]
  \end{aligned} \\
  &= \frac{1}{J} \sum_{i\in[1,J]}  \sum_{\vy^{(1)},\ldots,\vy^{(J)}} \operatorname{p}\left(\vy^{(1)}\right)\ldots\operatorname{p}\left(\vy^{(J)}\right)
  \langle \tilde{\vPhi}^{(-i)}, \vPhi\left(\vy^{(i)}\right) \rangle \zeta\left( \vy^{(i)} \right)
\end{aligned}
\end{equation}

For a given value of $i$, we have:
\begin{equation}
\label{ch7:eqn:mm_11}
\begin{aligned}
& \sum_{\vy^{(1)},\ldots,\vy^{(J)}} \operatorname{p}\left(\vy^{(1)}\right)\ldots\operatorname{p}\left(\vy^{(J)}\right)  \langle \tilde{\vPhi}^{(-i)}, \vPhi\left(\vy^{(i)}\right) \rangle \zeta\left( \vy^{(i)} \right) \\
&\begin{aligned}
= \sum_{\vy^{(i)}} \operatorname{p}\left(\vy^{(i)}\right) \zeta\left( \vy^{(i)} \right) &\sum_{\vy^{(1)},\ldots,\vy^{(i-1)},\vy^{(i+1)},\ldots,\vy^{(J)}} \operatorname{p}\left(\vy^{(1)}\right)\ldots\operatorname{p}\left(\vy^{(i-1)}\right)\operatorname{p}\left(\vy^{(i+1)}\right)\ldots\operatorname{p}\left(\vy^{(J)}\right) \frac{1}{J-1} \\
& \langle \left[ \vPhi\left(\vy^{(1)}\right)+\ldots+\vPhi\left(\vy^{(i-1)}\right)+\vPhi\left(\vy^{(i+1)}\right)+\ldots+\vPhi\left(\vy^{(J)}\right) \right],\vPhi\left(\vy^{(i)}\right) \rangle 
\end{aligned} \\
&\begin{aligned}
= \sum_{\vy^{(i)}} &\operatorname{p}\left(\vy^{(i)}\right) \zeta\left( \vy^{(i)} \right) \frac{1}{J-1} \Bigl[ \\
																														   &\sum_{\vy^{(1)},\ldots,\vy^{(i-1)},\vy^{(i+1)},\ldots,\vy^{(J)}} \operatorname{p}\left(\vy^{(1)}\right)\ldots\operatorname{p}\left(\vy^{(i-1)}\right)\operatorname{p}\left(\vy^{(i+1)}\right)\ldots\operatorname{p}\left(\vy^{(J)}\right) \langle \vPhi\left(\vy^{(1)}\right), \vPhi\left(\vy^{(i)}\right) \rangle + \\
																															&\ldots \\
																															&+ \sum_{\vy^{(1)},\ldots,\vy^{(i-1)},\vy^{(i+1)},\ldots,\vy^{(J)}} \operatorname{p}\left(\vy^{(1)}\right)\ldots\operatorname{p}\left(\vy^{(i-1)}\right)\operatorname{p}\left(\vy^{(i+1)}\right)\ldots\operatorname{p}\left(\vy^{(J)}\right) \langle \vPhi\left(\vy^{(J)}\right), \vPhi\left(\vy^{(i)}\right) \rangle  \Bigr]
\end{aligned} \\
&\begin{aligned}
= \sum_{\vy^{(i)}} \operatorname{p}\left(\vy^{(i)}\right) \zeta\left( \vy^{(i)} \right) \frac{1}{J-1} \Bigl[ &\sum_{\vy^{(1)}} \operatorname{p}\left(\vy^{(1)}\right) \langle \vPhi\left(\vy^{(1)}\right), \vPhi\left(\vy^{(i)}\right) \rangle + \\ 
																																				&\ldots \\
																																				&+ \sum_{\vy^{(i-1)}} \operatorname{p}\left(\vy^{(i-1)}\right) \langle \vPhi\left(\vy^{(i-1)}\right), \vPhi\left(\vy^{(i)}\right) \rangle \\
																																				&+ \sum_{\vy^{(i+1)}} \operatorname{p}\left(\vy^{(i+1)}\right) \langle \vPhi\left(\vy^{(i+1)}\right), \vPhi\left(\vy^{(i)}\right) \rangle \\ 
																																				&\ldots \\
																																				&+ \sum_{\vy^{(J)}} \operatorname{p}\left(\vy^{(J)}\right) \langle \vPhi\left(\vy^{(J)}\right), \vPhi\left(\vy^{(i)}\right) \rangle \Bigr]
\end{aligned} \\
&= 	\sum_{\vy^{(i)}} \operatorname{p}\left(\vy^{(i)}\right) \zeta\left( \vy^{(i)} \right)  \frac{1}{J-1} \left[ \underbrace{\langle \hat{\vPhi}, \vPhi\left(\vy^{(i)}\right) \rangle + \ldots + \langle \hat{\vPhi}, \vPhi\left(\vy^{(i)}\right) \rangle}_\text{$J-1$ times}  \right] \\
&= \sum_{\vy^{(i)}} \operatorname{p}\left(\vy^{(i)}\right) \zeta\left( \vy^{(i)} \right)  \langle \hat{\vPhi}, \vPhi\left(\vy^{(i)}\right) \rangle \\
&= \mathbb{E}_{\vy \sim \operatorname{p}\left(.\right)}  \langle \hat{\vPhi}, \vPhi\left(\vy\right) \rangle  \zeta\left( \vy \right).
\end{aligned}
\end{equation}

\noindent Finally, by collecting the results for the $J$ values of the index $i$, we obtain:
\begin{equation}
\label{ch7:eqn:mm_12}
\begin{aligned}
\mathcal{E} &= \frac{1}{J} \cdot J \cdot  \mathbb{E}_{\vy \sim \operatorname{p}\left(.\right)}  \langle \hat{\vPhi}, \vPhi\left(\vy\right) \rangle  \zeta\left( \vy \right) \\
&= \mathbb{E}_{\vy \sim \operatorname{p}\left(.\right)}  \langle \hat{\vPhi}, \vPhi\left(\vy\right) \rangle  \zeta\left( \vy \right)\\
&= \mathcal{A}.
\end{aligned}
\end{equation}
\end{proof}


\end{document}